\newtheorem{theorem}{Theorem}[section]
\newtheorem{corollary}[theorem]{Corollary}
\newtheorem{proposition}[theorem]{Proposition}
\newtheorem{definition}[theorem]{Definition}
\theoremstyle{remark}
\newtheorem{remark}[theorem]{Remark}
\newcommand{\BALD}{\begin{aligned}}
\newcommand{\EALD}{\end{aligned}}
\newcommand{\BALDS}{\begin{aligned*}}
\newcommand{\EALDS}{\end{aligned*}}
\newcommand{\BCAS}{\begin{cases}}
\newcommand{\ECAS}{\end{cases}}
\newcommand{\BEAS}{\begin{eqnarray*}}
\newcommand{\EEAS}{\end{eqnarray*}}
\newcommand{\BEQ}{\begin{equation}}
\newcommand{\EEQ}{\end{equation}}
\newcommand{\BIT}{\begin{itemize}}
\newcommand{\EIT}{\end{itemize}}
\newcommand{\BMAT}{\begin{bmatrix}}
\newcommand{\EMAT}{\end{bmatrix}}
\newcommand{\BNUM}{\begin{enumerate}}
\newcommand{\ENUM}{\end{enumerate}}
\newcommand{\BA}{\begin{array}}
\newcommand{\EA}{\end{array}}
\newcommand{\reals}{\mathbf{R}}
\newcommand{\diag}{\mathop{\mathbf{diag}}}
\DeclareMathOperator{\linspan}{span}
\newcommand{\norm}[1]{\left\| #1 \right\|}
\newcommand{\R}{\mathbb{R}}
\title{Gradient Descent Converges to Minimizers}
\author{Jason D. Lee$^\sharp,$ Max Simchowitz$^\sharp$, Michael I. Jordan$^\sharp\dagger$,
and Benjamin Recht$^\sharp\dagger$\\
$^\sharp$Department of Electrical Engineering and Computer Sciences\\
$^\dagger$Department of Statistcs\\
University of California, Berkeley
}
\begin{document}

\maketitle

\begin{abstract}
We show that gradient descent converges to a local minimizer, almost surely with random initialization.  This is proved by applying the Stable Manifold Theorem from dynamical systems theory.
\end{abstract}

\textbf{Keywords:}
Gradient descent, smooth optimization, saddle points, local minimum, dynamical systems.
\section{Introduction}

Saddle points have long been regarded as a tremendous obstacle for continuous optimization. There are many well known examples when worst case initialization of gradient descent provably converge to saddle points \citep[Section 1.2.3]{nesterov2004introductory}, and hardness results which show that finding even a \emph{local} minimizer of non-convex functions is NP-Hard in the worst case~\citep{murty1987some}.  However, such worst-case analyses have not daunted practitioners, and high quality solutions of continuous optimization problems are readily found by a variety of simple algorithms.  Building on tools from the theory of dynamical systems, this paper demonstrates that, under very mild regularity conditions, saddle points are indeed of little concern for the gradient method.

More precisely, let $f: \R^d \to \R$ be twice continuously differentiable, and consider the classic gradient method with constant step size $\alpha$:
\begin{equation}\label{GradDescent}
x_{k+1} =x_k - \alpha \nabla f(x_k).
\end{equation}
We call $x$ a critical point of $f$ if $\nabla f(x) = 0$, and say that $f$ satisfies the $\emph{strict saddle property}$ if each critical point $x$ of $f$ is either a local minimizer, or a ``strict saddle'', i.e, $\nabla^2 f(x)$ has at least one strictly negative eigenvalue. We prove:
\begin{quote}
If $f:\R^d \to \R$ is twice continuously differentiable and satisfies the strict saddle property, then gradient descent (Equation~\ref{GradDescent}) with a random initialization and sufficiently small constant step size converges to a local minimizer or negative infinity almost surely.
\end{quote}
Here, by sufficiently small, we simply mean less than the inverse of the Lipschitz constant of the gradient. As we discuss below, such step sizes are standard for the gradient method.  We remark that the strict saddle assumption is necessary in the worst case, due to hardness results regarding testing the local optimality of functions whose Hessians are highly degenerate at critical points (e.g, quartic polynomials)~\citep{murty1987some}.

\subsection{Related work}

Prior work has show that first-order descent methods can circumvent strict saddle points, provided that they are augmented with unbiased noise whose variance is sufficiently large along each direction. For example, \cite{pemantle1990nonconvergence} establishes convergence of the Robbins-Monro stochastic approximation to local minimizers for strict saddle functions. More recently, \cite{ge2015escaping} give quantitative rates on the convergence of noise-added stochastic gradient descent to local minimizers,  for strict saddle functions. The condition that the noise have large variance along all directions is often not satisfied by the randomness which arises in sample-wise or coordinate-wise stochastic updates. In fact, it generally requires that additional, near-isotropic noise be added at each iteration, which yields convergence rates that depend heavily on problem parameters like dimension. In contrast, our results hold for the simplest implementation of gradient descent and thus do not suffer from the slow convergence associated with adding high-variance noise to each iterate. 

But is this strict saddle property reasonable? Many works have answered in the affirmative by demonstrating that many objectives of interest do in fact satisfy the ``strict saddle'' property: PCA, a fourth-order tensor factorization \citep{ge2015escaping}, formulations of dictionary learning  \citep{sun2015complete2,sun2015complete1} and phase retrieval \citep{sun2016phase}.

To obtain provable guarantees, the authors of \cite{sun2015complete2,sun2015complete1} and \cite{sun2016phase} adopt trust-region methods which leverage Hessian information in order to circumvent saddle points. This approach joins a long line of related strategies, including: a modified Newton's method with curvilinear line search \citep{more1979use}, the modified Cholesky method \citep{gill1974newton}, trust-region methods \citep{conn2000trust}, and the related cubic regularized Newton's method \citep{nesterov2006cubic}, to name a few. Specialized to deep learning applications, \cite{dauphin2014identifying,pascanu2014saddle} have introduced a saddle-free Newton method. 

Unfortunately, such curvature-based optimization algorithms have a per-iteration computational complexity which scales quadratically or even cubically in the dimension $d$, rendering them unsuitable for optimization of high dimensional functions. In contrast, the complexity of an iteration of gradient descent is linear in dimension. We also remark that the authors of \cite{sun2016phase} empirically observe gradient descent with $100$ random initializations on the phase retrieval problem reliably converges to a local minimizer, and one whose quality matches that of the solution found using more costly trust-region techniques. 

More broadly, many recent works have shown that gradient descent plus smart initialization provably converges to the global minimum for a variety of non-convex problems: such settings include matrix factorization \citep{Keshavan09,zhaononconvex} , phase retrieval \citep{candes2015phase,cai2015optimal}, dictionary learning \citep{arora2015simple}, and latent-variable models \citep{zhang2014spectral}. While our results only guarantee convergence to local minimizers, they eschew the need for complex and often computationally prohibitive initialization procedures. 

Finally, some preliminary results have shown that there are settings in which if an algorithm converges to a saddle point it necessarily has a small objective value. For example, \cite{choromanska2014loss} studies the loss surface of a particular Gaussian random field as a proxy for understanding the objective landscape of deep neural nets. The results leverage the Kac-Rice Theorem \citep{adler2009random,auffinger2013random}, and establish that that critical points with more positive eigenvalues have lower expected function value, often close to that of the global minimizer. 
We remark that functions drawn from this Gaussian random field model share the strict saddle property defined above, and so our results apply in this setting.  On the other hand, our results are considerably more general, as they do not place stringent generative assumptions on the objective function $f$.

\subsection{Organization}
The rest of the paper is organized as follows. Section \ref{sec:prelim} introduces the notation and definitions used throughout the paper. Section \ref{sec:intuition} provides an intuitive explanation for why it is unlikely that gradient descent converges to a saddle point, by studying a non-convex quadratic and emphasizing the analogy with power iteration. Section \ref{sec:main} states our main results which guarantee gradient descent converges to only local minimizers, and also establish rates of convergence depending on the local geometry of the minimizer. The primary tool we use is the local stable manifold theorem, accompanied by inversion of gradient descent via the proximal point algorithm. Finally, we conclude in Section \ref{sec:conclusion} by suggesting several directions of future work.

\section{Preliminaries}
\label{sec:prelim}
Throughout the paper, we will use $f$ to denote a real-valued function in $C^2$, the space of twice-continuously differentiable functions, and $g$ to denote the corresponding gradient map with step size $\alpha$,
\begin{align}
g(x) = x-\alpha \nabla f(x).
\end{align}
The Jacobian of $g$ is given by $Dg(x)_{ij} = \frac{\partial g_i}{\partial x_j} (x)$, or $Dg(x) = I- \alpha \nabla ^2 f(x) $. In addition to being $C^2$, our main regularity assumption on $f$ is that it has a Lipschitz gradient:
\begin{align*}
\norm{\nabla f(x) - \nabla f(y) }_2 \le L \norm{x-y}_2.
\end{align*}

The $k$-fold composition of the gradient map $g^k (x)$ corresponds to performing $k$ steps of gradient descent initialized at $x$. The iterates of gradient descent will be denoted $x_k := g^k (x_0)$. All the probability statements are with respect to $\nu$, the distribution of $x_0$, which we assume is absolutely continuous with respect to Lebesgue measure. 

A fixed point of the gradient map $g$ is a critical point of the function $f$. Critical points can be saddle points, local minima, or local maxima. In this paper, we will study the critical points of $f$ via the fixed points of $g$, and then apply dynamical systems theory to $g$.

\begin{definition}
\begin{enumerate}
\item A point $x^*$ is a critical point of $f$ if it is a fixed point of the gradient map $g(x^*)=x^*$, or equivalently $\nabla f(x^*)=0$.

\item A critical point $x^*$ is isolated if there is a neighborhood $U$ around $x^*$, and $x^*$ is the only critical point in $U$.

\item A critical point is a local minimum if there is a neighborhood $U$ around $x^*$ such that $f(x^*) \le f(x)$ for all $x \in U$, and a local maximum if $f(x^*) \ge f(x)$.

\item A critical point is a saddle point if for all neighborhoods $U$ around $x^*$, there are $x,y \in U$ such that $f(x) \le f(x^*) \le f(y)$.
\end{enumerate}
\end{definition}

As mentioned in the introduction, we will be focused on saddle points that have directions of strictly negative curvature.  This notion is made precise by the following definition.
\begin{definition}[Strict Saddle]
A critical point $x^*$ of $f$ is a strict saddle if $\lambda_{\min} (\nabla^2 f(x^*))<0$.
\label{def:strict-saddle}
\end{definition}

Since we are interested in the attraction region of a critical point, we define the stable set.
\begin{definition}[Global Stable Set]
The global stable set $W^s (x^*)$ of a critical point $x^*$ is the set of initial conditions of gradient descent that converge to $x^*$:
\begin{align*}
W^s (x^*) = \{ x: \lim_k g^k (x) = x^*\}.
\end{align*}
\end{definition}

\section{Intuition}
\label{sec:intuition}
To illustrate why gradient descent does not converge to saddle points, consider the case of a non-convex quadratic, $f(x)=\frac{1}{2} x^T H x$.  Without loss of generality, assume $H = \diag(\lambda_1,...,\lambda_n)$  with $\lambda_1, ..., \lambda_k >0$ and $\lambda_{k+1}, \dots, \lambda_n <0$.  $x^*=0$ is the unique critical point of this function and the Hessian at $x^*$ is $H$.  Note that gradient descent initialized from $x_0$ has iterates
\[
	x_{k+1} = \sum_{i=1}^n (1-\alpha \lambda_i)^{k+1} \langle e_i,x_0 \rangle e_i\,.
\]
where $e_i$ denote the standard basis vectors. This iteration resembles power iteration with the matrix $I - \alpha H$. 

The gradient method is guaranteed to converge with a constant step size provided $0< \alpha < \frac2L$~\citep{nesterov2004introductory}.  For this quadratic $f$, $L$ is equal to $\max |\lambda_i|$.  Suppose $\alpha < 1/L$, a slightly stronger condition.  Then we will have $(1- \alpha \lambda_i)< 1$ for $i \le k$ and $(1-\alpha \lambda_i) >1$ for $i >k$. If  $x_0 \in E_s:=\linspan(e_1,\ldots, e_k)$, then $x_k$ converges to the saddle point at $0$ since $(1-\alpha \lambda_i)^{k+1} \to 0$. However, if $x_0$ has a component outside $E_s$ then gradient descent diverges to $\infty$.  For this simple quadratic function, we see that the global stable set (attractive set) of $0$ is the subspace $E_s$.  Now, if we choose our initial point at random, the probability of that point landing in $E_s$ is zero.

As an example of this phenomena for a non-quadratic function, consider the following example from \cite[Section 1.2.3]{nesterov2004introductory}. Letting $f(x,y) = \frac12 x^2 +\frac14 y^4-\frac12 y^2$, the corresponding gradient mapping is 
\begin{align*}
g(x) &= \begin{bmatrix}
(1-\alpha) x\\
(1+\alpha) y - \alpha y^3
\end{bmatrix}.
\end{align*}
The critical points are
\begin{align*}
z_1 = \begin{bmatrix} 0 \\0 \end{bmatrix}, \quad z_2 = \begin{bmatrix} 0 \\ -1 \end{bmatrix},\quad
z_3 = \begin{bmatrix} 0 \\ 1 \end{bmatrix}.
\end{align*}
 The points $z_2$ and $z_3$ are isolated local minima, and $z_1$ is a saddle point.
 
Gradient descent initialized from any point of the form $\begin{bmatrix} x \\ 0 \end{bmatrix}$ converges to the saddle point $z_1$. Any other initial point either diverges, or converges to a local minimum, so the stable set of $z_1$ is the $x$-axis, which is a zero measure set in $\reals^2$. By computing the Hessian, 
\begin{align*}
\nabla ^2 f(x) = \begin{bmatrix}
1 & 0 \\
0 & 3y^2 -1
\end{bmatrix}
\end{align*}
we find that $\nabla ^2 f(z_1)$ has one positive eigenvalue with eigenvector that spans the $x$-axis, thus agreeing with our above characterization of the stable set. If the initial point is chosen randomly, there is zero probability of initializing on the $x$-axis and thus zero probability of converging to the saddle point $z_1$.
 
In the general case, the local stable set $W^{s}_{loc} (x^*)$ of a critical point $x^*$ is well-approximated by the span of the eigenvectors corresponding to positive eigenvalues.  By an application of Taylor's theorem, one can see that if the initial point $x_0$ is uniformly random in a small neighborhood around $x^*$, then the probability of initializing in the span of these eigenvectors is zero whenever there is a negative eigenvalue.  Thus, gradient descent initialized at $x_0$ will leave the neighborhood. The primary difficulty is that $x_0$ is randomly distributed over the entire domain, not a small neighborhood around $x^*$, and Taylor's theorem does not provide any global guarantees.

However, the global stable set can be found by inverting the gradient map via $g^{-1}$. Indeed, the global stable set is precisely $\cup_{k=0}^\infty g^{-k} ( W^{s}_{loc} (x^*))$. This follows because if a point $x$ converges to $x^*$, then for some sufficiently large  $k$ it must enter the local stable set.  That is, $x$ converges to $x^*$ if and only if $g^k (x) \in W^{s}_{loc}$ for sufficiently large $k$. If $W^{s}_{loc} (x^*)$ is of measure zero, then $g^{-k} ( W^{s}_{loc} (x^*))$ is also of measure zero, and hence the global stable set is of measure zero. Thus, gradient descent will never converge to $x^*$ from a random initialization.

In Section \ref{sec:main}, we formalize the above arguments by showing the existence of an inverse gradient map. The case of degenerate critical points, critical points with zero eigenvalues, is more delicate; the geometry of the global stable set is no longer characterized by only the number of positive eigenvectors. However in Section \ref{sec:main}, we show that if a critical point has at least one negative eigenvalue, then the global stable set is of measure zero.

 \section{Main Results}
\label{sec:main}

We now state and prove our main theorem, making our intuition rigorous.
\begin{theorem}
Let $f$ be a $C^2$ function  and $x^*$ be a strict saddle. Assume that $0<\alpha <\frac1L$, then
$$
\Pr( \lim_k x_k = x^* ) =0.
$$
\label{thm:main}
\end{theorem}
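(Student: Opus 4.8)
The plan is to reduce the global statement to a local one via the Stable Manifold Theorem and then propagate the measure-zero conclusion backwards along the gradient iteration. First I would argue that $g$ is a local diffeomorphism: since $Dg(x) = I - \alpha \nabla^2 f(x)$ and $\norm{\alpha \nabla^2 f(x)}_2 \le \alpha L < 1$, the Jacobian $Dg(x)$ is invertible everywhere, so by the inverse function theorem $g$ is a local $C^1$ diffeomorphism on $\R^d$. In particular, for every measurable set $S$ with Lebesgue measure zero, $g^{-1}(S)$ also has measure zero (locally $g^{-1}$ is Lipschitz, hence maps null sets to null sets, and $\R^d$ is covered by countably many such neighborhoods). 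This is the key structural fact that makes the backward-iteration argument work.

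Next I would invoke the local Stable Manifold Theorem (or, more precisely, the stable-center manifold version) at the fixed point $x^*$ of $g$. Because $x^*$ is a strict saddle, $\nabla^2 f(x^*)$ has a strictly negative eigenvalue $\lambda < 0$, so $Dg(x^*) = I - \alpha \nabla^2 f(x^*)$ has an eigenvalue $1 - \alpha\lambda > 1$, i.e. an eigenvalue strictly outside the unit circle. The unstable subspace $E^u$ of $Dg(x^*)$ is therefore nontrivial, so the stable-center subspace $E^{sc} = E^s \oplus E^c$ is a proper subspace of $\R^d$. The theorem then furnishes a neighborhood $U$ of $x^*$ and an embedded $C^1$ disc $W^{cs}_{loc}(x^*) \subseteq U$, tangent at $x^*$ to $E^{sc}$, of dimension $d - \dim E^u \le d-1$, with the property that any point whose entire forward orbit under $g$ stays in $U$ must lie on $W^{cs}_{loc}(x^*)$. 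Being a $C^1$ submanifold of dimension strictly less than $d$, $W^{cs}_{loc}(x^*)$ has Lebesgue measure zero.

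Now I would assemble the pieces. Suppose $x_0$ satisfies $\lim_k g^k(x_0) = x^*$. Then the orbit $g^k(x_0)$ is eventually contained in $U$, so there is some $K$ with $g^k(x_0) \in U$ for all $k \ge K$; applying the theorem to the point $g^K(x_0)$, whose forward orbit stays in $U$, gives $g^K(x_0) \in W^{cs}_{loc}(x^*)$. Hence
\begin{align*}
W^s(x^*) \subseteq \bigcup_{k=0}^{\infty} g^{-k}\bigl(W^{cs}_{loc}(x^*)\bigr).
\end{align*}
Since $W^{cs}_{loc}(x^*)$ has measure zero and $g^{-1}$ preserves measure-zero sets, each $g^{-k}(W^{cs}_{loc}(x^*))$ has measure zero, and a countable union of null sets is null, so $W^s(x^*)$ has Lebesgue measure zero. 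Because the initialization law $\nu$ is absolutely continuous with respect to Lebesgue measure, $\Pr(\lim_k x_k = x^*) = \nu(W^s(x^*)) = 0$.

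The main obstacle is the delicate handling of the \emph{center} directions: a naive argument using only the classical hyperbolic Stable Manifold Theorem breaks down when $\nabla^2 f(x^*)$ has zero eigenvalues, since then $Dg(x^*)$ has eigenvalues exactly on the unit circle and there is no clean hyperbolic splitting. The fix is to use the center-stable manifold version of the theorem, which still guarantees that orbits remaining near $x^*$ lie on a submanifold of codimension equal to $\dim E^u \ge 1$ — this is exactly where the strict saddle hypothesis is used, and it is what lets the argument cover degenerate critical points. A secondary technical point worth stating carefully is that $g$ need only be $C^1$ (from $f \in C^2$) for the relevant version of the manifold theorem to apply, and that the local diffeomorphism property, while global on $\R^d$, does not require $g$ itself to be globally invertible — only that preimages of null sets are null, which follows from local invertibility and second countability.
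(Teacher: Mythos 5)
Your proof is correct and follows the same strategy as the paper: invoke the center-stable manifold theorem for the gradient map $g$ at the fixed point $x^*$, note that the strict saddle hypothesis makes the unstable subspace nontrivial so that $W^{cs}_{loc}$ has positive codimension and hence Lebesgue measure zero, characterize $W^s(x^*)$ as a countable union of preimages $g^{-k}(W^{cs}_{loc})$, and conclude via absolute continuity of the initialization. The one place you deviate is in how you justify that preimages under $g$ preserve null sets: the paper proves the stronger statement that $g$ is a \emph{global} diffeomorphism of $\R^d$ (injectivity from $\alpha L < 1$, surjectivity by exhibiting the proximal-point inverse $y \mapsto \arg\min_x \tfrac{1}{2}\norm{x-y}^2 - \alpha f(x)$, then the inverse function theorem for smoothness of $g^{-1}$), whereas you appeal only to the everywhere-local inverse function theorem plus a second-countability covering argument. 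Your version is slightly more economical, since the stable manifold theorem and the null-set pullback only require the local diffeomorphism property and global surjectivity is never actually used in the argument; the paper's global-diffeomorphism fact is a pleasant structural bonus but not logically necessary for this theorem.
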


That is, the gradient method never converges to saddle points, provided the step size is not chosen aggressively.  Greedy methods that use precise line search may still get stuck at stationary points.  However, a short-step gradient method will only converge to minimizers. 

\begin{remark} Note that even for the convex functions method, a constant step size slightly less than $1/L$ is a nearly optimal choice.  Indeed, for $\theta<1$, if one runs the gradient method with step size of $\theta/L$ on a convex function a convergence rate of $O(\frac{1}{\theta T})$ is attained.  
\end{remark}

\begin{remark}
When $\lim_k x_k$ does not exist, the above theorem is trivially true.
\end{remark}

To prove Theorem~\ref{thm:main},  our primary tool will be the theory of Invariant Manifolds. Specifically, we will use  Stable-Center Manifold theorem developed in \cite{smale1967differentiable,shub1987global,hirsch1977invariant}, which allows for a local characterization of the stable set.  Recall that a map $g: X \to Y$ is a diffeomorphism if $g$ is a bijection, and $g$ and $g^{-1}$ are continuously differentiable.

\begin{theorem}[Theorem III.7, \cite{shub1987global}]
Let $0$ be a fixed point for the $C^r$ local diffeomorphism $\phi: U \to E$, where $U$ is a neighborhood of $0$ in the Banach space $E$. Suppose that $E = E_s \oplus E_u$, where $E_s$ is the span of the eigenvectors corresponding to eigenvalues less than or equal to $1$ of $D \phi(0)$, and $E_u$ is the span of the eigenvectors corresponding to eigenvalues greater than $1$ of $D \phi(0)$. Then there exists a $C^r$ embedded disk $W^{cs}_{loc}$ that is tangent to $E_s$ at $0$ called the \emph{local stable center manifold}.  Moreover, there exists a neighborhood $B$ of $0$, such that $\phi(W^{cs}_{loc} ) \cap B \subset W^{cs} _{loc}$, and $\cap_{k=0}^\infty \phi^{-k} (B) \subset W^{cs}_{loc}$.
\label{thm:stable-center-manifold}
\end{theorem}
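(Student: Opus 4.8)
The statement is the classical center--stable manifold theorem, and I would prove it by the Lyapunov--Perron method, which has the pleasant feature that the trapping conclusion $\cap_{k\ge0}\phi^{-k}(B)\subset W^{cs}_{loc}$ emerges essentially for free. First I would reduce to a globally defined, almost-linear map. Write $A:=D\phi(0)$ and $\phi(x)=Ax+R(x)$, so $R(0)=0$ and $DR(0)=0$; fix a $C^\infty$ bump $\chi$ equal to $1$ on $\{\norm{x}\le\delta\}$ and supported in $\{\norm{x}\le 2\delta\}$, and replace $R$ by $\tilde R(x):=\chi(x/\delta)R(x)$. Since $R\in C^r$ with $DR(0)=0$, the map $\tilde R$ is globally $C^r$, coincides with $R$ on the $\delta$-ball, and has global Lipschitz constant $\epsilon=\epsilon(\delta)\to0$. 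Thus $\Phi:=A+\tilde R$ is a global $C^r$ map agreeing with $\phi$ near $0$, and it suffices to build the manifold for $\Phi$. Splitting $E=E_s\oplus E_u$ so that $A=A_s\oplus A_u$ and passing to an adapted norm, we get $\norm{A_s}\le 1+\delta'$ and $\norm{A_u^{-1}}\le\mu<1$, where $\mu^{-1}$ is (close to) the minimum eigenvalue modulus of $A_u$; as all eigenvalues of $A_u$ exceed $1$, we may fix a weight $\rho$ with $\norm{A_s}<\rho<\mu^{-1}$.

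Next I would set up and solve the graph equation. For an orbit $x_{k+1}=\Phi(x_k)$ in coordinates $x_k=(s_k,u_k)$, forward iteration of the $E_s$-component and \emph{backward} iteration of the invertible $E_u$-component (legitimate along any orbit growing no faster than rate $\rho$, since $\mu\rho<1$) give $s_k=A_s^k s_0+\sum_{i=0}^{k-1}A_s^{\,k-1-i}\tilde R_s(x_i)$ and $u_k=-\sum_{j\ge0}A_u^{-(j+1)}\tilde R_u(x_{k+j})$. Working in the Banach space $\cX_\rho:=\{(x_k)_{k\ge0}:\norm{x}_\rho:=\sup_k\rho^{-k}\norm{x_k}<\infty\}$, define for each small $v\in E_s$ the operator $(\mathcal{T}_v x)_k:=\bigl(A_s^k v+\sum_{i=0}^{k-1}A_s^{\,k-1-i}\tilde R_s(x_i),\,-\sum_{j\ge0}A_u^{-(j+1)}\tilde R_u(x_{k+j})\bigr)$. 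The inequalities $\norm{A_s}<\rho<\mu^{-1}$ make every geometric series converge, so $\mathcal{T}_v$ maps a small ball of $\cX_\rho$ into itself, and $\mathrm{Lip}(\tilde R)\le\epsilon$ makes $\mathcal{T}_v$ a contraction once $\epsilon$ is small. Banach's fixed point theorem yields a unique small fixed point $x(v)\in\cX_\rho$, depending uniformly Lipschitz on $v$; a fixed point of $\mathcal{T}_v$ is exactly a rate-$\rho$-bounded forward $\Phi$-orbit with $s_0=v$. Set $h(v):=-\sum_{j\ge0}A_u^{-(j+1)}\tilde R_u(x_j(v))$ (the $E_u$-component of $x_0(v)$) and $W^{cs}_{loc}:=\{(v,h(v)):v\in E_s,\ \norm{v}\ \text{small}\}$.

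Then I would check the asserted properties. Because $R(0)=0$, $x\equiv0$ is the fixed point at $v=0$, so $h(0)=0$; differentiating $\mathcal{T}_v x(v)=x(v)$ in $v$ (valid by the uniform-contraction/implicit-function theorem in Banach space) and evaluating at $v=0$ with $DR(0)=0$ forces $Dh(0)=0$, i.e.\ $W^{cs}_{loc}$ is tangent to $E_s$ at $0$. For local invariance: if $p=(v,h(v))\in W^{cs}_{loc}$, the shifted sequence $(x_{k+1}(v))_{k\ge0}$ is again the $\mathcal{T}$-fixed point at parameter $s_1$ (the $E_s$-component of $\Phi(p)$), so $\Phi(p)=(s_1,h(s_1))\in W^{cs}_{loc}$; intersecting with a small ball $B$ gives $\phi(W^{cs}_{loc})\cap B\subset W^{cs}_{loc}$. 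For trapping: if $x_0\in\cap_{k\ge0}\phi^{-k}(B)$, its whole forward $\Phi$-orbit lies in $B$, hence is bounded and (as $\rho>1$) lies in $\cX_\rho$, and it satisfies the backward-solved equations above, i.e.\ $\mathcal{T}_{s_0}x=x$; by uniqueness $x=x(s_0)$, so $x_0=(s_0,h(s_0))\in W^{cs}_{loc}$. This is precisely $\cap_{k\ge0}\phi^{-k}(B)\subset W^{cs}_{loc}$.

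The remaining, and most technical, step is upgrading $h$ from Lipschitz to $C^r$; this is the main obstacle. The standard device is the fiber-contraction (section) theorem: the formal derivative $Dh$ must solve the fixed-point equation obtained by differentiating $\mathcal{T}_v x=x$, which is a contraction on bounded continuous sections provided a spectral-gap condition holds --- namely $\norm{A_s}^{m}<\mu^{-1}$ for every $m\le r$ --- and this holds here because the $E_u$-rates are uniformly above $1$ while $\norm{A_s}$ can be pushed arbitrarily close to $1$ by shrinking $\delta$. Iterating this (or invoking the $C^r$ section theorem of Hirsch--Pugh--Shub verbatim) gives $h\in C^r$, whence $W^{cs}_{loc}=\mathrm{graph}(h)$ is a $C^r$ embedded disk with all the stated properties. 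Besides this regularity bootstrap, the one conceptual point that needs care --- and the reason this is the \emph{center}-stable rather than stable manifold --- is the choice of weight $\rho\in(\norm{A_s},\mu^{-1})$: orbits on the manifold need not decay (the $E_s$-spectrum only lies in the closed unit disk), so one must allow slow rate-$\rho$ growth. (The same neutral directions are why $W^{cs}_{loc}$ is in general only $C^r$ and depends on the cutoff, so the theorem can only assert the inclusions $\phi(W^{cs}_{loc})\cap B\subset W^{cs}_{loc}$ and $\cap_{k\ge0}\phi^{-k}(B)\subset W^{cs}_{loc}$, not equalities.)
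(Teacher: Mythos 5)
The paper offers no proof of this statement at all: it is quoted verbatim as Theorem III.7 of Shub's \emph{Global Stability of Dynamical Systems} and used as a black box, so there is nothing internal to compare against. Your Lyapunov--Perron argument is a correct and essentially complete route to the result, and it differs from the cited source, which builds $W^{cs}_{loc}$ by the Hadamard graph-transform/section-theorem machinery of Hirsch--Pugh--Shub. Your version has the advantage you point out: the characterization of $W^{cs}_{loc}$ as the set of initial conditions of rate-$\rho$-bounded forward orbits makes the trapping inclusion $\cap_{k\ge0}\phi^{-k}(B)\subset W^{cs}_{loc}$ an immediate consequence of uniqueness of the fixed point of $\mathcal{T}_{s_0}$, which is exactly the property the paper actually needs; the graph-transform proof gets invariance more directly but has to work harder for trapping. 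Your deferral of the $C^r$ regularity to the fiber-contraction theorem is standard and honestly flagged, and the spectral-gap condition $\norm{A_s}^m<\mu^{-1}$ for $m\le r$ is indeed satisfiable here because the $E_s$-spectrum lies in the closed unit disk. Two small caveats, neither fatal for the application at hand: first, the theorem is stated over a general Banach space, where $C^\infty$ (or even $C^1$) bump functions need not exist, so the cutoff step $\tilde R=\chi(\cdot/\delta)R$ requires either restricting to spaces admitting smooth bump functions or the more delicate local truncation used by Hirsch--Pugh--Shub; for the paper's setting $E=\R^d$ this is vacuous. Second, ``span of eigenvectors'' should really be the spectral subspaces associated with the two parts of the spectrum (the operator need not be diagonalizable), but that imprecision is in the paper's statement, not in your proof, and your argument only uses the norm estimates $\norm{A_s}<\rho<\norm{A_u^{-1}}^{-1}$, which are exactly what the spectral decomposition provides.
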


To unpack all of this terminology, what the stable manifold theorem says is that if there is a map that diffeomorphically deforms a neighborhood of a critical point, then this implies the existence of a local stable center manifold $W^{cs}_{loc}$ containing the critical point.  This manifold has dimension equal to the number of eigenvalues of the Jacobian of the critical point that are less than $1$.   $W^{sc}_{loc}$ contains all points that are locally forward non-escaping meaning, in a smaller neighborhood $B$, a point converges to $x^*$ after iterating $\phi$ only if it is in $W^{cs}_{loc} \cap B$.

Relating this back to the gradient method, replace $\phi$ with our gradient map $g$ and let $x^*$ be a strict saddle point.  We first record a very useful fact:
\begin{proposition}
The gradient mapping $g$ with step size $\alpha <\frac1L$ is a diffeomorphism.
\label{thm:gd-diffeo} 
\end{proposition}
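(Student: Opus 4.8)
The plan is to verify, one at a time, the three conditions in the definition of a diffeomorphism: that $g$ is a $C^1$ bijection of $\R^d$ onto itself whose inverse is $C^1$. Smoothness of $g$ is immediate, since $g = \mathrm{id} - \alpha\nabla f$ with $f \in C^2$, so $g$ is $C^1$ (in fact $C^2$) with Jacobian $Dg(x) = I - \alpha\nabla^2 f(x)$. For injectivity, suppose $g(x) = g(y)$; then $x - y = \alpha\bigl(\nabla f(x) - \nabla f(y)\bigr)$, so the Lipschitz bound gives $\norm{x-y} = \alpha\norm{\nabla f(x) - \nabla f(y)} \le \alpha L\norm{x-y}$, and since $\alpha L < 1$ this forces $x = y$. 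For nonsingularity of the Jacobian, the Lipschitz gradient assumption implies $\norm{\nabla^2 f(x)}_2 \le L$, so every eigenvalue of $Dg(x) = I - \alpha\nabla^2 f(x)$ lies in $[1-\alpha L,\,1+\alpha L]\subset(0,\infty)$; hence $Dg(x)$ is invertible everywhere, and by the inverse function theorem $g$ is a local diffeomorphism at each point.

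The remaining, and I expect hardest, step is surjectivity. Fix a target $y \in \R^d$; solving $g(x) = y$ is equivalent to solving $\nabla f(x) = \tfrac1\alpha(x - y)$, which is exactly the stationarity condition for the proximal-type objective $c_y(x) := \tfrac{1}{2\alpha}\norm{x - y}^2 - f(x)$. Its Hessian $\tfrac1\alpha I - \nabla^2 f(x)$ is positive definite because $1/\alpha > L$, so $c_y$ is strongly convex; moreover the quadratic upper bound on $f$ furnished by the Lipschitz gradient (the descent lemma) shows $c_y$ is coercive. Hence $c_y$ attains a unique global minimizer $x$, and that $x$ satisfies $g(x) = y$. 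Equivalently, $g^{-1}(y) = \prox_{\alpha(-f)}(y)$ is well-defined and single-valued, so $g$ maps onto all of $\R^d$.

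Putting the pieces together: $g$ is a bijection of $\R^d$ and a local diffeomorphism at every point, so its inverse is $C^1$ with $Dg^{-1}(y) = \bigl(Dg(g^{-1}(y))\bigr)^{-1}$, and therefore $g$ is a diffeomorphism. The main obstacle is the surjectivity claim: injectivity and nonsingularity of $Dg$ reduce to one-line contraction and eigenvalue estimates, but ontoness genuinely requires reformulating $g^{-1}$ variationally as the minimizer of the strongly convex, coercive function $c_y$ — the proximal point perspective the authors allude to — and checking that this minimization problem is well posed.
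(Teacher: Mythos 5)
Your proposal is correct and follows essentially the same route as the paper: injectivity via the Lipschitz contraction estimate, surjectivity by exhibiting $g^{-1}(y)$ as the minimizer of the strongly convex proximal-type objective $\tfrac12\norm{x-y}^2 - \alpha f(x)$ (yours is rescaled by $1/\alpha$, which changes nothing), and smoothness of the inverse from nonsingularity of $Dg$ plus the inverse function theorem. The only small difference is that you separately argue coercivity of $c_y$ via the descent lemma, whereas the paper invokes only strong convexity — but strong convexity of a continuous function on $\R^d$ already implies coercivity and hence existence of the minimizer, so that extra step is harmless redundancy rather than a gap.
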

We will prove this proposition below.  But let us first continue to apply the stable manifold theorem.  Note that $Dg(x) = I-\alpha \nabla^2 f(x)$.   Thus, the set $W^{cs}_{loc}$ is a manifold of dimension equal to the number of non-negative eigenvalues of the $\nabla^2 f(x)$.  Note that by the strict saddle assumption, this manifold has strictly positive codimension and hence has measure zero.

Let $B$ be the neighborhood of $x^*$ promised by the Stable Manifold Theorem.  If $x$ converges to $x^*$ under the gradient map, then there exists a $T$ such that $g^t(x) \in B$ for all $t\geq T$.  This means that $g^t(x) \in \cap_{k=0}^\infty g^{-k} (B)$, and hence, $g^t(x) \in W^{cs}_{loc}$.  That is, we have shown that
$$
W^{s}(x^*) \subseteq \bigcup_{l\ge 0 }^{\infty} g^{-l} \left( \bigcap_{k=0}^\infty g^{-k} (B) \right).
$$
Since diffeomorphisms map sets of measure zero to sets of measure zero, and countable unions of measure zero sets have measure zero, we conclude that $W^{s}$ has measure zero.  That is, we have proven Theorem~\ref{thm:main}.

\subsection{Proof of Proposition~\ref{thm:gd-diffeo}}

We first check that $g$ is injective from $\reals^n \to \reals^n$ for $\alpha < \frac{1}{L }$.  Suppose that there exist $x$ and $y$ such that $g(x) =g(y)$.
Then we would have $x-y = \alpha(  \nabla f(x) -\nabla f(y))$ and hence
\begin{align*}
\|x-y\| = \alpha\| \nabla f(x) -\nabla f(y)\| \leq \alpha L \|x -y\|\,.
\end{align*}
Since $\alpha L<1$, this means $x=y$.

To show the gradient map is surjective, we will construct an explicit inverse function. The inverse of the gradient mapping is given by performing the proximal point algorithm on the function $-f$. The proximal point mapping of $-f$ centered at $y$ is given by
\begin{align*}
x_y= \arg\min_{x} \frac12 \norm{x-y}^2 - \alpha f(x).
\end{align*}
For $\alpha< \frac1L$, the function above is strongly convex with respect to $x$, so there is a unique minimizer.
Let $x_y$ be the unique minimizer, then by KKT conditions,
\begin{align*}
y&=x_y-\nabla f(x_y) = g(x_y)\,.
\end{align*}
Hence, $x_y$ is mapped to $y$ by the gradient map.

We have already shown that $g$ is a bijection, and continuously differentiable.  Since $Dg(x) =I- \alpha \nabla^2 f(x)$ is invertible for $\alpha< \frac1L$, the inverse function theorem guarantees $g^{-1} $ is continuously differentiable, completing the proof that $g$ is a diffeomorphism.

\subsection{Further consequences of Theorem~\ref{thm:main}}

\begin{corollary}
Let $C$ be the set of saddle points and assume they are all strict. If  $C$ has at most countably infinite cardinality, then
$$
\Pr( \lim_k x_k \in C ) =0.
$$
\label{thm:no-saddles}
\end{corollary}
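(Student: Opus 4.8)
The plan is to decompose the event $\{\lim_k x_k \in C\}$ as a countable union over the saddle points and invoke Theorem~\ref{thm:main} together with countable subadditivity. First I would observe that the event $\{\lim_k x_k \in C\}$ implicitly requires the limit to exist; on that event there is a (unique) point $x^* \in C$ with $\lim_k x_k = x^*$. Hence
$$
\left\{ \lim_k x_k \in C \right\} = \bigcup_{x^* \in C} \left\{ \lim_k x_k = x^* \right\}.
$$
Since every element of $C$ is a strict saddle by hypothesis, Theorem~\ref{thm:main} applies to each $x^* \in C$ and gives $\Pr(\lim_k x_k = x^*) = 0$.

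Next, because $C$ is at most countably infinite, the union above is a countable union of events, so countable subadditivity of the probability measure $\nu$ yields
$$
\Pr\!\left( \lim_k x_k \in C \right) \le \sum_{x^* \in C} \Pr\!\left( \lim_k x_k = x^* \right) = 0,
$$
which is the claim. (Equivalently, one can phrase this in measure-theoretic language: the global stable set of $C$ is $\bigcup_{x^* \in C} W^s(x^*)$, a countable union of measure-zero sets, hence measure zero, and $\nu$ is absolutely continuous with respect to Lebesgue measure.)

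There is essentially no substantive obstacle here; the only point requiring a moment's care is the measurability/decomposition step — namely that $\{\lim_k x_k \in C\}$ is exactly the countable union of the singleton-limit events, which hinges on the countability of $C$ so that the union bound is legitimate. If $C$ were uncountable the argument would break, which is precisely why the cardinality hypothesis is imposed.
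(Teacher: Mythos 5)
Your proof is correct and follows the same argument as the paper: decompose $\{\lim_k x_k \in C\}$ as a countable union over $x^* \in C$, apply Theorem~\ref{thm:main} to each strict saddle, and conclude by countable additivity of null sets. The extra remarks on measurability and the role of countability are sound but not substantively different from the paper's reasoning.
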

\begin{proof}
By applying Corollary \ref{thm:main} to each point $x^* \in C$, we have that $\Pr( \lim_k x_k = x^*)=0$. Since the critical points are countable, the conclusion follows since countable union of null sets is a null set. 
\end{proof}
\begin{remark}
If the saddle points are isolated points, then the set of saddle points is at most countably infinite.
\end{remark}

\begin{theorem}
Assume the same conditions as Theorem \ref{thm:no-saddles} and $\lim_k x_k $ exists, thien $ \Pr(\lim_k x_k = x^\star) =1$, where $x^\star$ is a local minimizer.
\label{thm:local-min}
\end{theorem}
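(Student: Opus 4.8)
The plan is to route everything through Corollary~\ref{thm:no-saddles}, after pinning down what the limit of gradient descent can be. First, if $\bar x := \lim_k x_k$ exists then $\bar x$ is a critical point of $f$: since $f\in C^2$ the gradient map $g$ is continuous, so letting $k\to\infty$ in $x_{k+1}=g(x_k)$ (and using $x_{k+1}\to\bar x$) gives $\bar x=g(\bar x)$, i.e.\ $\nabla f(\bar x)=0$. Every critical point is a local minimizer, a local maximizer, or a saddle point, since one that is neither of the first two has, in every neighborhood, points of strictly larger and of strictly smaller $f$-value. Saddle points are excluded almost surely: by hypothesis they are strict and at most countably many, so Corollary~\ref{thm:no-saddles} gives $\Pr(\bar x\in C)=0$.

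It then remains to exclude convergence to a local maximizer that is not also a minimizer. I would first observe that such a limit forces $x_0=\bar x$: by the descent lemma for $L$-smooth functions (valid for $\alpha\le 1/L$),
\[
f(g(x))\ \le\ f(x)-\tfrac{\alpha}{2}\,\norm{\nabla f(x)}^2 ,
\]
$f(x_k)$ is non-increasing with limit $f(\bar x)$, hence $f(x_k)\ge f(\bar x)$; but eventually $x_k$ enters a neighborhood of the local maximizer $\bar x$ on which $f\le f(\bar x)$, so $f(x_k)=f(\bar x)$ for large $k$, whence $\nabla f(x_k)=0$ and $x_k=\bar x$ for large $k$; since $g$ is injective (Proposition~\ref{thm:gd-diffeo}) and fixes $\bar x$, this propagates back to $x_0=\bar x$. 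Thus $W^s(\bar x)=\{\bar x\}$ for every local maximizer, so the set of initializations converging to some non-minimizing local maximizer is exactly the set of such points. That set is Lebesgue-null: on $\{\nabla f=0\}$ the Hessian $\nabla^2 f=D(\nabla f)$ vanishes almost everywhere, so the non-minimizing local maximizers with $\nabla^2 f\neq 0$ already form a null set, while a non-minimizing local maximizer with $\nabla^2 f=0$ would be a critical point that is neither a local minimizer nor possesses a negative Hessian eigenvalue, which the strict-saddle assumption excludes. Since $\nu$ is absolutely continuous this event has probability zero, and combining the cases shows $\bar x$ is almost surely a local minimizer.

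The last case is the one I expect to be the real obstacle. The descent-lemma step is routine, but showing that the non-minimizing local maximizers form a null set genuinely leans on the strict-saddle property --- the bare statement ``all topological saddles are strict'' would permit, for instance, a positive-measure set of degenerate local maxima --- and it is also where the two notions in play must be reconciled: ``strict saddle'' in the eigenvalue sense of Definition~\ref{def:strict-saddle}, which subsumes strict local maxima, versus ``saddle point'' in the topological sense of the earlier definition, which does not --- so strict local maxima are fed into Corollary~\ref{thm:no-saddles} only by hand.
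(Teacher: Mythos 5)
The paper's own proof is a single line: since $\Pr(\lim_k x_k\in C)=0$ and $\lim_k x_k$ exists, the limit must be a local minimizer. Implicit in that inference is the claim that a critical point not in $C$ is a local minimizer, which is only true if ``saddle point'' is read as ``non-minimizing critical point'' (consistent with the introduction's strict-saddle property, which sorts \emph{every} critical point into ``local minimizer'' or ``strict saddle''), not as the topological Definition~2.1.4 (under which a local maximizer is not a saddle). Your proof takes a genuinely different route: rather than fold local maximizers into $C$, you handle them separately and explicitly. The descent-lemma-plus-injectivity argument showing $W^s(\bar x)=\{\bar x\}$ for every non-minimizing local maximizer is correct and elementary, and it has the side benefit of not needing any countability hypothesis on the set of such maximizers. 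The Lebesgue-density argument (on the set $\{\nabla f=0\}$ the derivative of $\nabla f$, i.e.\ the Hessian, vanishes a.e.) combined with the strict-saddle hypothesis to rule out a positive-measure set of degenerate non-minimizing local maxima is also correct, though notably heavier machinery than the paper deploys. The simpler alternative you could have used: a non-minimizing local maximizer with a negative Hessian eigenvalue \emph{is} a strict saddle in the sense of Definition~2.2, so Theorem~\ref{thm:main} applies to it directly, and the intro's strict-saddle property rules out the degenerate ones --- this is essentially what the paper is silently doing. Your closing remark correctly identifies the real ambiguity: the paper uses two inequivalent notions of ``saddle,'' and the proof of this theorem only goes through under the eigenvalue-based reading (or with a supplementary argument like yours). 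So: correct, more careful than the source, and it diagnoses an actual gap in the paper's exposition.
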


\begin{proof}
Using the previous theorem, $ \Pr(\lim_k x_k \in  C) =0$. Since $\lim_k x_k$ exists and there is zero probability of converging to a saddle, then $\Pr( \lim_k x_k = x^*)=1$, where $x^*$ is a local minimizer.
\end{proof}

We now discuss two  sufficient conditions for $\lim_k x_k$ to exist. The following proposition prevents $x_k$ from escaping to $\infty$, by enforcing that $f$ has compact sublevel sets, $\{x: f(x) \le c\}$. This is true for any coercive function, $\lim_{\norm{x} \to \infty} f(x) =\infty$, which holds in most machine learning applications since $f$ is usually a loss function.
\begin{proposition}[Proposition 12.4.4 of \cite{lange2013optimization}]
Assume that $f$ is continuously differentiable, has isolated critical points, and compact sublevel sets, then $\lim_k x_k$ exists and that limit is a critical point of $f$.
\end{proposition}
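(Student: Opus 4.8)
The plan is to combine the standard sufficient-decrease estimate for gradient descent with the compactness hypothesis to obtain boundedness of the iterates, and then use a connectedness argument to pin the limit down to a single critical point. First I would invoke the descent lemma: since $\nabla f$ is $L$-Lipschitz, $f(x_{k+1}) \le f(x_k) - \alpha\bigl(1 - \tfrac{L\alpha}{2}\bigr)\norm{\nabla f(x_k)}^2$, and because $\alpha < \tfrac1L$ the constant $c := \alpha(1 - \tfrac{L\alpha}{2})$ satisfies $c > \tfrac{\alpha}{2} > 0$. Hence $\{f(x_k)\}$ is non-increasing, so every iterate lies in the sublevel set $S := \{x : f(x) \le f(x_0)\}$, which is compact by hypothesis; in particular $\{x_k\}$ is bounded, and since $f$ attains its minimum on the compact set $S$ it is bounded below along the sequence.

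Next I would telescope the decrease inequality, which gives $c\sum_{k\ge 0}\norm{\nabla f(x_k)}^2 \le f(x_0) - \inf_k f(x_k) < \infty$, so $\norm{\nabla f(x_k)} \to 0$, and consequently $\norm{x_{k+1} - x_k} = \alpha\norm{\nabla f(x_k)} \to 0$. Because $\{x_k\}$ is bounded it has at least one accumulation point, and by continuity of $\nabla f$ every accumulation point $\bar x$ satisfies $\nabla f(\bar x) = 0$, i.e., is a critical point of $f$.

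Finally — and this is the crux — I would use the elementary fact (sometimes attributed to Ostrowski) that a bounded sequence in $\R^d$ whose successive differences tend to zero has a \emph{connected} set of accumulation points. That set is nonempty, compact, connected, and contained in the set of critical points of $f$; since the critical points are isolated, the critical set is totally disconnected, so the accumulation set must be a singleton $\{x^*\}$. A bounded sequence with a unique accumulation point converges, hence $x_k \to x^*$ with $\nabla f(x^*) = 0$.

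The main obstacle is exactly this last step: boundedness of the iterates together with ``every accumulation point is critical'' does not by itself yield convergence — a priori the sequence could shuttle between two distinct critical points. What rules this out is the combination of $\norm{x_{k+1}-x_k}\to 0$ (forcing the limit set to be connected) with the isolation of the critical points, so both hypotheses of the proposition are used precisely here.
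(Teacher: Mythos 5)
The paper does not prove this proposition; it is imported wholesale from Lange's \emph{Optimization} (Prop.\ 12.4.4) and used as a black box, so there is no internal proof to compare against. Your argument is correct and is the standard proof of such results: sufficient decrease plus compact sublevel sets give boundedness and $\sum_k\norm{\nabla f(x_k)}^2<\infty$, hence $\norm{x_{k+1}-x_k}\to 0$; Ostrowski's connectedness lemma then forces the (nonempty, compact, connected) limit set to lie inside the discrete set of critical points, so it is a singleton. The only thing worth flagging is that the proposition as quoted assumes only that $f\in C^1$ with isolated critical points and compact sublevel sets, whereas your sufficient-decrease step also invokes the $L$-Lipschitz gradient and the step-size bound $\alpha<1/L$; these are standing hypotheses elsewhere in the paper, so this is consistent with the intended context, but it is worth noting that the descent inequality is what supplies the monotonicity and summability on which everything else rests.
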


The second sufficient condition for $\lim_k x_k$ to exist is based on the Lojasiewicz gradient inequality, which characterizes the steepness of the gradient near a critical point. The Lojasiewicz inequality ensures that the length traveled by the iterates of gradient descent is finite. This will also allow us to derive rates of convergence to a local minimum.
\begin{definition}[Lojasiewicz Gradient Inequality]
A critical point $x^*$ is satisfies the Lojasiewicz gradient inequality if there exists a neighborhood $U$, $m, \epsilon>0$, and $0\le a <1$ such that
\begin{equation}
\norm{\nabla f(x)} \ge m | f(x) -f(x^*)|^{a}
\label{eq:gradient-ineq}
\end{equation}
for all x in $\{x\in U: f(x^*) < f(x) < f(x^*) +\epsilon\}$.
\label{def:gradineq}
\end{definition}
The Lojasiewicz inequality is very general as discussed in \cite{bolte2010characterizations,attouch2010proximal,attouch2013convergence}. In fact every analytic function satisfies the Lojasiewicz inequality. Also if the solution is $\mu$-strongly convex in a neighborhood, then the Lojasiewicz inequality is satisfied with parameters $a= \frac12$, and $m= \sqrt{2\mu}$.

\begin{proposition}
Assume the same conditions as Theorem \ref{thm:no-saddles}, and the iterates do not escape to $\infty$, meaning $\{x_k\}$ is a bounded sequence. Then $\lim_k x_k$ exists and $\lim_k x_k =x^*$ for a local minimum $x^*$.

Furthermore if $x^*$ satisfies the Lojasiewicz gradient inequality for 
$0< a \le\frac12$, then for some $C$ and $b<1$ independent of $k$,
\begin{align*}
\norm{x_k - x^*} \le C b^k.
\end{align*}
For $\frac12<a<1$,
\begin{align*}
\norm{x_k - x^*} \le \frac{C}{k^{(1-a)/(2a-1)}}.
\end{align*}
\label{prop:rates}
\end{proposition}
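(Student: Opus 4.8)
The plan is to pair the classical sufficient-decrease estimate for constant-step gradient descent with the Lojasiewicz inequality of Definition~\ref{def:gradineq}, which will simultaneously force the iterates to converge to a single point and pin down the rate. Since $\nabla f$ is $L$-Lipschitz, the descent lemma gives $f(x_{k+1}) \le f(x_k) - c\,\norm{\nabla f(x_k)}^2$ with $c := \alpha(1-\tfrac{\alpha L}{2})$, and $c \ge \alpha/2 > 0$ because $\alpha < 1/L$. Thus $f(x_k)$ is non-increasing; since $\{x_k\}$ is bounded, $f$ is bounded below along the trajectory, so $f(x_k)\downarrow f_\infty$ and, telescoping, $\sum_k\norm{\nabla f(x_k)}^2 \le \tfrac1c\bigl(f(x_0)-f_\infty\bigr)<\infty$. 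In particular $\nabla f(x_k)\to 0$ and $\norm{x_{k+1}-x_k}=\alpha\norm{\nabla f(x_k)}\to 0$, so every accumulation point of the bounded sequence $\{x_k\}$ is a critical point.

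To turn the (connected) set of accumulation points into an actual limit, and to obtain the rate, fix such an accumulation point $x^*$; normalize $f(x^*)=0$ and write $r_k := f(x_k)$, which we may assume is $>0$ for all $k$ (otherwise the iterates are eventually stationary and the claims are immediate). For $k$ large, $x_k$ lies in the Lojasiewicz neighborhood $U$ with $0<f(x_k)<\epsilon$, hence $\norm{\nabla f(x_k)}\ge m\,r_k^{a}$. Let $\phi(t)=\tfrac1{1-a}t^{1-a}$, which is concave with $\phi'(t)=t^{-a}$. Combining concavity of $\phi$, the decrease bound $r_k-r_{k+1}\ge c\norm{\nabla f(x_k)}^2$, and the Lojasiewicz inequality:
\[
\phi(r_k)-\phi(r_{k+1})\ \ge\ r_k^{-a}\,(r_k-r_{k+1})\ \ge\ \frac{cm}{\norm{\nabla f(x_k)}}\,\norm{\nabla f(x_k)}^2\ =\ \frac{cm}{\alpha}\,\norm{x_{k+1}-x_k}.
\]
Summing over a tail $j\ge K$ and using $\phi(r_j)\to 0$ gives $\sum_{j\ge K}\norm{x_{j+1}-x_j}\le \tfrac{\alpha}{cm(1-a)}\,r_K^{1-a}$. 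The trajectory therefore has finite length, so $x_k$ converges to $x^*$; Theorem~\ref{thm:local-min} then identifies $x^*$ as a local minimizer (almost surely), and $\norm{x_K-x^*}\le \tfrac{\alpha}{cm(1-a)}\,r_K^{1-a}$.

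It remains to bound $r_k=f(x_k)-f(x^*)$, using $r_k-r_{k+1}\ge c\norm{\nabla f(x_k)}^2\ge cm^2 r_k^{2a}$ together with $r_k\to 0$. If $0<a\le\tfrac12$, then $2a\le1$ and $r_k\le1$ for $k$ large, so $r_k^{2a}\ge r_k$ and $r_{k+1}\le(1-cm^2)r_k$ with $1-cm^2\in[0,1)$ (positivity of $r_{k+1}$ forces $cm^2\le1$); hence $r_k\le C_0(1-cm^2)^k$, and the tail bound yields $\norm{x_k-x^*}\le C(1-cm^2)^{(1-a)k}=Cb^k$ with $b=(1-cm^2)^{1-a}<1$. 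If $\tfrac12<a<1$, set $\beta:=2a-1>0$; applying convexity of $t\mapsto t^{-\beta}$ to $r_{k+1}\le r_k-cm^2 r_k^{1+\beta}$ gives $r_{k+1}^{-\beta}-r_k^{-\beta}\ge \beta cm^2$, hence $r_k^{-\beta}\ge \beta cm^2(k-k_0)$ and $r_k\le C_1 k^{-1/(2a-1)}$; the tail bound then gives $\norm{x_k-x^*}\le C\bigl(k^{-1/(2a-1)}\bigr)^{1-a}=C\,k^{-(1-a)/(2a-1)}$, as required.

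The delicate point is the passage from a connected set of accumulation points to a genuine limit: sufficient decrease alone only controls $\sum\norm{\nabla f(x_k)}^2$, and for a merely $C^2$ objective one cannot in general rule out a connected continuum of (strict-saddle-free) critical points as the limit set without extra structure --- e.g. isolated critical points, as in the preceding proposition. The Lojasiewicz inequality is precisely what supplies the missing finite-length estimate; granted that, the rest is the standard Polyak/Attouch--Bolte bookkeeping, the only care being to keep the decrease constant $c$ strictly positive (guaranteed by $\alpha<1/L$) and to split on the two regimes of the exponent $a$ to get geometric versus polynomial decay of $f(x_k)-f(x^*)$.
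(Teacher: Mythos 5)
Your proof is correct, and it takes a genuinely different route from the paper's. The paper outsources both the convergence of $x_k$ and the finite-length estimate $\sum_{j\ge k}\norm{x_{j+1}-x_j}\le \tfrac{2}{\alpha m(1-a)}f(x_k)^{1-a}$ to \cite{absil2005convergence}, and then runs a recursion directly on the tail sums $e_k=\sum_{j\ge k}\norm{x_{j+1}-x_j}$: substituting the Lojasiewicz inequality and $\nabla f(x_k)=(x_k-x_{k+1})/\alpha$ yields $e_k\le\beta(e_k-e_{k+1})^{1/d}$, which is then handled by a contraction argument (for $a\le\tfrac12$) or by induction (for $a>\tfrac12$). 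You instead re-derive the finite-length estimate from first principles --- the descent lemma $f(x_{k+1})\le f(x_k)-c\norm{\nabla f(x_k)}^2$ combined with concavity of the desingularizing function $\phi(t)=\tfrac{1}{1-a}t^{1-a}$, i.e.\ the Attouch--Bolte telescoping argument --- and then run the recursion on the residual $r_k=f(x_k)-f(x^*)$ via $r_{k+1}\le r_k-cm^2 r_k^{2a}$ before converting to a distance bound through $\norm{x_k-x^*}\le e_k\lesssim r_k^{1-a}$. The two derivations land on the same rates (geometric for $a\le\tfrac12$, polynomial of exponent $(1-a)/(2a-1)$ for $a>\tfrac12$). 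What your version buys is self-containment (no reliance on the external lemma from Absil et al.) and a slightly cleaner rate computation --- the $t\mapsto t^{-\beta}$ convexity step for $r_k$ is more transparent than the paper's induction on $e_k$ --- at the cost of having to handle carefully the standard bootstrapping that guarantees the tail of the trajectory stays in the Lojasiewicz neighborhood $U$, which you gesture at but do not fully spell out (the paper buries this in the Absil citation).

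One minor point worth tightening: your bound $r_{k+1}\le(1-cm^2)r_k$ in the $a\le\tfrac12$ case requires $cm^2<1$ to give a nontrivial decay factor. Your parenthetical that "positivity of $r_{k+1}$ forces $cm^2\le 1$" is not quite the right justification, since the inequality $r_{k+1}\le r_k-cm^2 r_k^{2a}$ is one-sided and a negative right-hand side would simply be vacuous; the correct observation is that if $cm^2\ge 1$ then the inequality forces $r_{k+1}=0$, i.e.\ $f(x_{k+1})=f(x^*)$, at which point $x_{k+1}$ is a local minimizer in $U$ and hence a critical point, so the iterates are stationary thereafter and the claim is trivial --- which is what your earlier caveat covers, so the argument goes through, just with the reasoning reordered.
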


\begin{proof}
The first part of the theorem follows from \cite{absil2005convergence}, which shows that $\lim_k x_k$ exists. By Theorem \ref{thm:local-min}, $\lim_k x_k $ is a local minimizer $x^*$. Without loss of generality, we may assume that $f(x^*)=0$ by shifting the function.

\cite{absil2005convergence} also establish
\begin{align*}
\sum_{j=k}^\infty\norm{x_{j+1} -x_j}\le  \frac{2}{\alpha m (1-a)} f(x_k)^{1-a}.
\end{align*}
Define $e_k = \sum_{j=k}^\infty\norm{x_{j+1} -x_j}$, and since $e_k \ge \norm{x_{k} -x^*}$ it suffices to upper bound $e_k$.

Since we have established that $x_k$ converges, for $k$ large enough we can use the gradient inequality and $\nabla f(x_k) = \frac{x_{k} -x_{k+1}}{\alpha}$:
\begin{align*}
e_k &\le \frac{2}{\alpha m (1-a) } f(x_k)^{1-a}\\
&\le \frac{2}{\alpha m^{1/a} (1-a) } \norm{\nabla f(x_k)}^{(1-a)/a} \\
&\le \frac{2}{(m\alpha)^{1/a} (1-a) }( e_k - e_{k+1} )^{(1-a)/a}.
\end{align*}
Define $\beta = \frac{2}{(m\alpha)^{1/a} (1-a) }$ and $d=\frac{a}{1-a}$.
First consider the case $0\le a \le \frac12$, then $d \le 1$.
Thus,
\begin{align*}
e_k &\le \beta (e_k - e_{k+1})^{1/d}\\
e_{k+1} &\le e_k - \left(\frac{e_k}{\beta}\right)^d\\
&\le \left( 1- \frac{1}{\beta^d} \right) e_k,
\end{align*}
where the last inequality uses $e_k<1$ and $d \le 1$.

For $\frac12 < a <1$, we have established $e_{k+1} \le e_k - \frac{1}{\beta^d} e_k ^d$. We show by induction that $e_{k+1} \le \frac{C}{(k+1)^{(1-a)/(2a-1)}}$.
The inductive hypothesis guarantees us $e_k \le \frac{C}{k^{(1-a)/(2a-1)}}$, so
\begin{align*}
e_{k+1}&\le \frac{C}{k^{(1-a)/(2a-1)}}-\frac{C^d/\beta^d}{k^{a/(2a-1)}}\\
&= \frac{C k - C^d/ \beta^d}{k \cdot k^{(1-a)/(2a-1)}}\\
&\le \frac{C (k -C^{d-1}/\beta^d) }{(k-1) (k+1)^{(1-a)/(2a-1)}}.
\end{align*}
For $C^{d-1}>\beta^d$,we have shown $e_{k+1} \le \frac{C}{ (k+1)^{(1-a)/(2a-1)}}$.
\end{proof}

\section{Conclusion}
\label{sec:conclusion}
We have shown that gradient descent with random initialization and appropriate constant step size does not converge to a saddle point. Our analysis relies on a characterization of the local stable set from the theory of invariant manifolds. The geometric characterization is not specific to the gradient descent algorithm.  To use Theorem \ref{thm:main}, we simply need the update step of the algorithm to be a diffeomorphism. For example if $g$ is the mapping induced by the proximal point algorithm, then $g$ is a diffeomorphism with inverse given by gradient ascent on $-f$. Thus the results in Section \ref{sec:main} also apply to the proximal point algorithm.  That is, \emph{the proximal point algorithm does not converge to saddles}. We expect that similar arguments can be used to show ADMM, mirror descent and coordinate descent do not converge to saddle points under appropriate choices of step size.  Indeed, convergence to minimizers has been empirically observed for the ADMM algorithm~\cite{sun2015complete1}.

It is not clear if the step size restriction ($\alpha<1/L$) is necessary to avoid saddle points.  Most of the constructions where the gradient method converges to saddle points require fragile initial conditions as discussed in Section~\ref{sec:intuition}.  It remains a possibility that methods that choose step sizes greedily, by Wolfe Line Search or backtracking, may still avoid saddle points provided the initial point is chosen at random.  We leave such investigations for future work.

Another important piece of future work would be relaxing the conditions on isolated saddle points.  It is possible that for the structured problems that arise in machine learning, whether in matrix factorization or convolutional neural networks, that saddle points are isolated after taking a quotient with respect to the associated symmetry group of the problem.  Techniques from dynamical systems on manifolds may be applicable to understand the behavior of optimization algorithms on problems with a high degree of symmetry.

It is also important to understand how stringent the strict saddle assumption is. Will a perturbation of a function always satisfy the strict saddle property? \cite{adler2009random} provide very general sufficient conditions for a random function to be Morse, meaning the eigenvalues at critical points are non-zero, which implies the strict saddle condition. These conditions rely on checking the density of $\nabla^2 f(x)$ has full support conditioned on the event that $\nabla f(x)=0$. This can be explicitly verified for functions $f$ that arise from learning problems.

However, we note that there are very difficult unconstrained optimization problems where the strict saddle condition fails.  Perhaps the simplest is optimization of quartic polynomials.  Indeed, checking if $0$ is a local minimizer of the quartic
\[
	f(x) = \sum_{i,j=1}^n q_{ij} x_i^2 x_j^2
\]
is equivalent to checking whether the matrix $Q=[q_{ij}]$ is co-positive, a co-NP complete problem.  For this $f$, the Hessian at $x=0$ is zero.  Interestingly, the strict saddle property failing is analogous in dynamical systems to the existence of a \emph{slow manifold} where complex dynamics may emerge.  Slow manifolds give rise to metastability, bifurcation, and other chaotic dynamics, and it would be intriguing to see how the analysis of chaotic systems could be applied to understand the behavior of optimization algorithms around these difficult critical points.

\section*{Acknowledgements}
The authors would like to thank Chi Jin, Tengyu Ma, Robert Nishihara, Mahdi Soltanolkotabi, Yuekai Sun, Jonathan Taylor, and Yuchen Zhang for their insightful feedback.  MS is generously supported by an NSF Graduate Research Fellowship.  BR is generously supported by ONR awards  N00014-14-1-0024, N00014-15-1-2620, and N00014-13-1-0129, and NSF awards CCF-1148243 and CCF-1217058.  MIJ is generously supported by  ONR award N00014-11-1-0688 and by the ARL and the ARO under grant number W911NF-11-1-0391.  This research is supported in part by NSF CISE Expeditions Award CCF-1139158, DOE Award SN10040 DE-SC0012463, and DARPA XData Award FA8750-12-2-0331, and gifts from Amazon Web Services, Google, IBM, SAP, The Thomas and Stacey Siebel Foundation, Adatao, Adobe, Apple Inc., Blue Goji, Bosch, Cisco, Cray, Cloudera, Ericsson, Facebook, Fujitsu, Guavus, HP, Huawei, Intel, Microsoft, Pivotal, Samsung, Schlumberger, Splunk, State Farm, Virdata and VMware.

\begin{small}
\bibliographystyle{plain}
\bibliography{gradient}
\end{small}

\end{document}